\newenvironment{itemize*}{\vspace*{-.5em}\begin{itemize}\setlength{\itemsep}{0pt}}{\end{itemize}\vspace*{-.5em}}
\newcommand{\expect}[1]{\mathbb{E}[#1]}
\DeclareMathOperator{\varsym}{Var}
\newcommand{\var}[1]{\varsym(#1)}
\DeclareMathOperator{\traceSym}{tr}
\newcommand{\trace}[1]{\traceSym(#1)}
\newcommand{\ignoreLiteral}[1]{}
\newcommand{\refsec}[1]{Section~\ref{#1}}
\newcommand{\reffig}[1]{Fig.~\ref{#1}}
\newcommand{\nonDiagCMat}{\ensuremath{C}\xspace}
\newcommand{\CMat}{\ensuremath{C_1}\xspace}
\newcommand{\sID}[2][]{\ensuremath{\textrm{ID}#1\ifstrempty{#2}{}{_{\textrm{#2}}}}\xspace}
\newcommand{\ID}{\sID{}}
\newcommand{\sGED}{\sID{GED}}
\newcommand{\sMLE}{\sID{MLE}}
\newcommand{\sMOM}{\sID{M\kern-.5ptO\kern-.5ptM}}
\newcommand{\sALID}{\sID{A\kern-.5ptL\kern-.6ptI\kern-.5ptD}}
\newcommand{\sTLE}{\sID{TLE}}
\newcommand{\sGAbider}{\sID{A\kern-1ptB\kern-.5ptI\kern-.5ptD}}
\newcommand{\sSAbider}{\sID{R\kern-.5ptA\kern-1ptB\kern-.5ptI\kern-.5ptD}}
\newcommand{\GeomAbider}{\sGAbider}
\newcommand{\StrictAbider}{\sSAbider}
\newcommand{\sMSet}[1]{\texttt{m#1}\xspace}
\newcommand{\sMNIST}{\emph{MNIST}\xspace}
\newcommand{\sGisette}{\emph{Gisette}\xspace}
\pgfplotsset{compat=1.8}
\newcounter{tmparraycounter}
\newcommand{\arrayTmpSaveItem}[1]{%
    \stepcounter{tmparraycounter}%
    \expandafter\def\csname tmparrayname\thetmparraycounter\endcsname{#1}%
}
\newcommand{\arrayParse}[1]{%
    \setcounter{tmparraycounter}{0}%
    \renewcommand{\do}{\arrayTmpSaveItem}%
    \docsvlist{#1}%
}
\newcommand{\arrayElemNth}[2]{\arrayParse{#1}\csname tmparrayname#2\endcsname}
\def\arrayElemFirst#1{10}
\def\arrayElemLast#1{300}
\newcommand{\histAreaOpacity}{.5}
\newcommand{\histLineWidth}{1pt}
\newcommand{\histLegendScale}{.77}
\newcommand{\histLegendCols}{2}
\definecolor{orangeScheme1}{RGB}{242,200, 91}
\definecolor{orangeScheme2}{RGB}{251,164,101}
\definecolor{orangeScheme3}{RGB}{248,110, 81}
\definecolor{orangeScheme4}{RGB}{238, 62, 56}
\definecolor{orangeScheme5}{RGB}{209, 25, 62}
\definecolor{blueScheme1}{RGB}{ 83,204,236}
\definecolor{blueScheme2}{RGB}{ 25,116,211}
\definecolor{blueScheme3}{RGB}{  0,  1,129}
\definecolor{greenScheme1}{RGB}{204,255,204}
\definecolor{greenScheme2}{RGB}{179,230,185}
\definecolor{greenScheme3}{RGB}{153,204,166}
\definecolor{greenScheme4}{RGB}{128,179,147}
\definecolor{greenScheme5}{RGB}{102,153,128}
\definecolor{greenScheme6}{RGB}{ 77,128,108}
\definecolor{greenScheme7}{RGB}{ 51,102, 89}
\definecolor{greenScheme8}{RGB}{ 26, 77, 70}
\definecolor{greenScheme9}{RGB}{  0, 51, 51}
\definecolor{tolPalette1}{RGB}{ 51, 34,136}
\definecolor{tolPalette2}{RGB}{ 17,119, 51}
\definecolor{tolPalette3}{RGB}{ 68,170,153}
\definecolor{tolPalette4}{RGB}{136,204,238}
\definecolor{tolPalette5}{RGB}{221,204,119}
\definecolor{tolPalette6}{RGB}{204,102,119}
\definecolor{tolPalette7}{RGB}{170, 68,153}
\definecolor{tolPalette8}{RGB}{136, 34, 85}
\tikzset{
    histBarStyle/.style = {
        draw=#1,
        fill=#1,
        line width=\histLineWidth
    }
}
\definecolor{showcasePal1}{RGB}{ 51, 34,136}
\definecolor{showcasePal2}{RGB}{115,199,185}
\definecolor{showcasePal3}{RGB}{204,102,119}
\newcommand{\histName}[1]{%
    \IfStrEqCase{#1}{%
        {GEDEstimatorFix}{\sGED}%
        {HillEstimator}{\sMLE}%
        {MOMEstimatorFix}{\sMOM}%
        {ALIDEstimatorFix}{\sALID}%
        {TightLIDEstimatorFix}{\sTLE}%
        {GeomAbider}{\sGAbider}%
        {StrictAbider}{\sSAbider}%
    }%
}
\newenvironment{histAreaAxis}[7][]{%
    \begin{axis}[%
        ybar,%
        xmin=#2,%
        xmax=#3,%
        ymin=0,%
        bar width=#4,%
        ymajorgrids,%
        scaled ticks=false,%
        y tick label style={%
            /pgf/number format/.cd,%
                fixed,%
                fixed zerofill,%
                precision=0,%
            /tikz/.cd%
        },%
        yticklabel={\pgfmathparse{\tick*100}\pgfmathprintnumber{\pgfmathresult}},%
        xtick pos=left,%
        ytick pos=left,%
        xlabel={\ID estimate},%
        ylabel={\% of points},%
        width=#6,%
        height=#7,%
        const plot mark mid,%
        every axis plot/.append style={%
            fill opacity=\histAreaOpacity,%
            draw opacity=0%
        },%
        cycle list name=#5,%
        #1%
    ]%
    \renewcommand{\addlegendentry}[1]{}%
}{%
    \end{axis}%
}
\newenvironment{histLineAxis}[7][]{%
    \begin{axis}[%
        xmin=#2,%
        xmax=#3,%
        ymin=0,%
        width=#6,%
        height=#7,%
        scaled ticks=false,%
        yticklabels={,,},%
        xticklabels={,,},%
        xtick pos=left,%
        ytick pos=left,%
        const plot mark mid,%
        every axis plot/.append style={%
            fill opacity=0%
        },%
        cycle list name=#5,%
        legend image post style={%
            fill opacity=\histAreaOpacity,%
            scale=\histLegendScale%
        },%
        legend style={%
            area legend,%
            at={(0,1)},%
            anchor=south west,%
            nodes={scale=\histLegendScale}%
        },%
        legend columns=\histLegendCols,%
        #1%
    ]%
}{%
    \end{axis}%
}
\definecolor{pcpGradCol1}{RGB}{211,148,  0}
\definecolor{pcpGradCol2}{RGB}{148,  0,211}
\definecolor{pcpGradCol3}{RGB}{  0,211,148}
\tikzset{
    pcpGradientA/.style={
        draw=pcpGradCol2!#1!pcpGradCol1
    },
    pcpGradientB/.style={
        draw=pcpGradCol3!#1!pcpGradCol2
    }
}
\newcommand{\lastplot}[3][]{}
\newcommand{\doubleIncludePlot}[1][]{%
    \def\plotWidthNoLegend{.45\textwidth}%
    \def\plotWidthWithLegend{.52\textwidth}%
    \def\plotWidth{170pt}%
    \def\plotHeight{140pt}%
    \def\plotNoLegend##1##2{%
        \begin{subfigure}[t]{\plotWidthNoLegend}%
            \includegraphics[#1]{##1}%
            \caption{##2}%
        \end{subfigure}%
    }%
    \def\plotWithLegend##1##2{%
        \begin{subfigure}[t]{\plotWidthWithLegend}%
            \includegraphics[#1]{##1}%
            \caption{##2}%
        \end{subfigure}%
    }%
}
\newcommand{\tripleIncludePlot}[1][]{%
    \def\plotWidthNoLegend{.33\textwidth}%
    \def\plotWidthWithLegend{.35\textwidth}%
    \def\plotWidth{125pt}%
    \def\plotHeight{100pt}%
    \def\plotYAxisOverride{}%
    \def\plotNoLegend##1##2{%
        \begin{subfigure}[t]{\plotWidthNoLegend}%
            \includegraphics[#1]{##1}%
            \caption{##2}%
        \end{subfigure}%
        \def\plotWidthNoLegend{.28\textwidth}%
        \def\plotYAxisOverride{ylabel={}}%
    }%
    \def\plotWithLegend##1##2{%
        \begin{subfigure}[t]{\plotWidthWithLegend}%
            \includegraphics[#1]{##1}%
            \caption{##2}%
        \end{subfigure}%
    }%
}
\pgfplotsset{%
    every axis/.append style={%
        label style={font=\scriptsize},%
        tick label style={font=\tiny},%
        x tick label style={yshift=.25em},%
        y tick label style={xshift=.1em},%
        ylabel shift=-5pt,%
        xlabel shift=-5pt,%
        legend style={%
            draw=none,%
            font=\footnotesize,%
            shift={(0pt,2pt)}%
        }%
    }%
}
\title{ABID: Angle Based Intrinsic Dimensionality}
\author{Erik Thordsen\orcidID{0000-0003-1639-3534} \and Erich Schubert\orcidID{0000-0001-9143-4880}} %
\institute{
TU Dortmund University, Dortmund, Germany
\texttt{\{erik.thordsen,erich.schubert\}@tu-dortmund.de}
}
\begin{document}
\hypersetup{pdfborderstyle={/S/U/W 0}}
\hypersetup{pdfborder={0 0 0}}%
\maketitle
\begin{abstract}
The intrinsic dimensionality refers to the ``true'' dimensionality of the data,
as opposed to the dimensionality of the data representation. For example, when
attributes are highly correlated, the intrinsic dimensionality can be much lower
than the number of variables. Local intrinsic dimensionality refers to the
observation that this property can vary for different parts of the data set;
and intrinsic dimensionality can serve as a proxy for the local difficulty of
the data set.

Most popular methods for estimating the local intrinsic dimensionality are
based on distances, and the rate at which the distances to the nearest
neighbors increase, a concept known as ``expansion dimension''. In this paper
we introduce an orthogonal concept, which does not use any distances:
we use the distribution of angles between neighbor points.
We derive the theoretical distribution of angles
and use this to construct an estimator for intrinsic dimensionality.

Experimentally, we verify that this measure behaves similarly,
but complementarily, to existing measures of intrinsic dimensionality.
By introducing a new idea of intrinsic dimensionality to the research community,
we hope to contribute to a better understanding of intrinsic dimensionality
and to spur new research in this direction.
\end{abstract}

\section{Introduction}

Intrinsic Dimensionality (ID) estimation is the process of estimating the dimension of a manifold embedding of a given data set either at each point of the data set individually or for the entire data set at large.
While describing the dimension of a given algebraic set at a specific point is a well-understood problem in algebra \cite{doi:10.1017/CBO9780511800474}, lifting these methods to a sample of an unknown function is not trivially possible.
Therefore methods that are very different from functional analysis are required to grasp the dimensionality of a discrete data set.
Prior work in the field is largely focused on analyzing the differential of  point counts in changing volumes \cite{DBLP:conf/kdd/AmsalegCFGHKN15,doi:10.1214/aos/1176343247,DBLP:conf/sisap/Houle17,DBLP:conf/icdm/HouleKN12}, as linear algebra gives estimates of these differentials assuming a certain dimensionality. These approaches rely on distances between points and assume the data to be uniformly sampled from their defining space. The resulting ID describes the dimension required to embed a point and its neighborhood in a manifold with reasonably small loss of precision.
In our novel approach for ID estimation, we derive an estimate based on the cosines between directional vectors of a point to all points in its neighborhood.
The basic idea is visualized in \reffig{fig:motivation}: in two-dimensional dense data, we see all directions evenly,
whereas in a linear subspace we mostly see parallel or opposite directions. Hence we aim at deriving an estimator
capable of computing the angles between observed data points.
It differs from the distance- and volume-based approaches as it describes the least dimensions required to connect a given point to the rest of the data set.
It can, therefore, be understood as a description of the simplicial composition of the data set.
Besides describing a different notion of local dimensionality,
we provide evidence that our approach is more robust and
gives stable estimates on smaller neighborhoods
than the volume-based approaches.
We hope that in the future the new angle-based interpretation of intrinsic dimensionality
will be combined with expansion-rate-based approaches
and spur further research in intrinsic dimensionality.

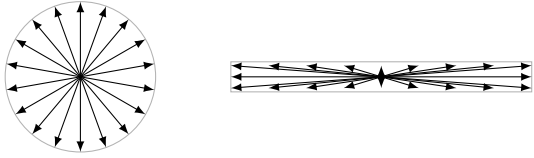
\begin{figure}[tb]\centering
\begin{tikzpicture}[scale=1]
\tikzstyle{l}=[->,>=latex];

\draw[black!30] (0,0) circle (1cm);
\foreach \a in {10,30,...,350}{
  \draw[l] (0,0) -- +(\a:1cm);
}

\draw[black!30] (2,-.2) rectangle (6,.2);

\draw[l] (4,0) -- +(+2,0);
\draw[l] (4,0) -- +(-2,0);
\foreach \a in {-2,-1.5,...,2}{
  \draw[l] (4,0) -- +(\a,+.15);
  \draw[l] (4,0) -- +(\a,-.15);
}
\end{tikzpicture}

\caption{Motivation of angle-based intrinsic dimensionality: in two-dimensional dense data,
we see all directions evenly, in noised one-dimensional linear data arrows go either in similar or in opposite directions.}
\label{fig:motivation}
\end{figure}

In \refsec{sec:related} we first discuss related work. We then discuss theoretical considerations
of dimensionality (and why it is not uniquely definable) in \refsec{sec:dimensionality}
and derive basic mathematical properties of angle distributions. %
In \refsec{sec:estimator} we introduce estimation techniques using this new angle-based notion of intrinsic dimensionality,
which we experimentally validate in \refsec{sec:experiments} before we conclude the paper in \refsec{sec:conclusions}.
\section{Related Work}\label{sec:related}

Intrinsic dimensionality has been shown to affect both the speed and accuracy
of similarity search problems such as approximate nearest-neighbor search
and the algorithms developed for this problem~\cite{DBLP:journals/jmlr/RadovanovicNI10,DBLP:conf/sisap/0001C19,DBLP:journals/ijait/BraticHKOR19}.
Intrinsic dimensionality has also been employed
to improve the quality of embeddings~\cite{DBLP:conf/sisap/SchubertG17},
to detect anomalies in data sets~\cite{DBLP:conf/sisap/HouleSZ18},
to determine relevant subspaces~\cite{DBLP:conf/sisap/BeckerHHLZ19},
and to improve generative adversarial networks (GANs)~\cite{DBLP:journals/corr/abs-1905-00643}.
Distance-based estimation of intrinsic dimensionality is the ``short tail'' equivalent
of extreme value theory~\cite{DBLP:conf/sisap/Houle17,DBLP:conf/sisap/Houle17a},
and many techniques can be adapted from estimators originally
devised for extreme values on the long tail of (censored-)
distributions~\cite{DBLP:journals/datamine/AmsalegCFGHKN18}, as previously used in disaster control.
Important estimators include the Hill estimator~\cite{doi:10.1214/aos/1176343247},
the aggregated version of it~\cite{doi:10.1198/073500101316970421},
the Generalized Expansion Dimension~\cite{DBLP:conf/icdm/HouleKN12},
method of moments estimators~\cite{DBLP:conf/kdd/AmsalegCFGHKN15},
regularly varying functions~\cite{DBLP:conf/kdd/AmsalegCFGHKN15},
and probability-weighted moments~\cite{DBLP:conf/kdd/AmsalegCFGHKN15}.
ELKI~\cite{DBLP:journals/corr/abs-1902-03616} also includes L-moments~\cite{doi:10.2307/2345653} based
adaptations of this and improves the bias of these estimators slightly.
A noteworthy recent development is the inclusion of pairwise distances
as additional measurements~\cite{tr/nii/ChellyHK16}
and the idea of also taking virtual mirror images of observed data points into account~\cite{DBLP:conf/sdm/AmsalegCHKRT19}.
We will note interesting parallels between these methods and our
new approach.

Angle-based approaches have been successfully used for outlier-detection in high-dimensional data,
for example with the method ABOD~\cite{DBLP:conf/kdd/KriegelSZ08}, which considers
points with a low variance of the (distance-weighed) angle spectrum to be anomalous,
with the assumption that such points are on the ``outside'' of the data set.
Our approach brings ideas from this method to
the estimation of intrinsic dimensionality (which in turn has been shown to
relate to outlierness~\cite{DBLP:conf/sisap/HouleSZ18}).
\section{On the Dimensionality of Functions and Data}\label{sec:dimensionality}

The dimensionality of a vector field in linear algebra is the
number of components of each vector; a quantity referred to as
\emph{representational dimensionality} because it characterizes the data
representation more than the underlying data. By selecting components of the
vectors, we can trivially obtain lower-dimensional projections.
Extending this to arbitrarily oriented linear projections gives
us affine subspaces also called \emph{linear manifolds}.
Yet, this is still not able to capture all varieties of dimensionality
that we use: consider the map $(x)\mapsto (x,\sin x)$, which clearly is a
(non-linear) embedding of a one-dimensional input space into a two-dimensional
representation. Because of this smooth map, and the ability to approximate the
resulting data to arbitrary precision with infinitesimal short linear pieces,
we consider such a curve to be a one-dimensional manifold.
This often aligns with human intuition,
for example when differentiating a circle (the outline)
from the corresponding disc (the contained area).
Yet, the mathematical notion of manifolds also has limitations: consider the
figure eight, which to a large extent resembles a line, just as the circle
-- except for the crossing point of the two lines, where a linear approximation
is no longer possible.
Similar problems will arise when we have to deal with a finite sample from the
data. For example, consider many parallel lines, close to each other. Analytically,
every sample will be from such a one-dimensional manifold. Yet, given only a finite
set of samples and close enough lines, the resulting data resembles a
two-dimensional plane.

The concept of \emph{local intrinsic dimensionality} (LID)~\cite{DBLP:conf/sisap/Houle17,DBLP:conf/sisap/Houle17a}
captures the need for allowing different parts of %
a data set to
have different dimensionality. Nevertheless, the \enquote{correct} answer to the question
of dimensionality is all but unambiguous: in the figure eight example, the data
is generated by a one-dimensional process, and also the expansion rate is linear,
but at the same time the crucial point cannot be locally approximated with a one-dimensional
linear function. A point on the surface of a ball (in $d$ dimensions) of uniform density lies on the
$(d{-}1)$-dimensional sphere surface, while points in the interior are $d$-dimensional -- which dimensionality should it be assigned?

In data analysis, we do not know the underlying functions. Sometimes we may
aim at selecting the best match from a given set of candidate functions,
but in many real problems we do not want to restrict ourselves to
such a candidate set, and we may not have enough data to become reasonably confident
to have found the \enquote{best} such function.
Hence, we opt for a non-parametric approach instead, where we attempt to estimate
the dimensionality based on the data samples; often centered around a particular
point of interest.

\begin{figure}[tb]
    \begin{subfigure}[t]{.25\linewidth}\centering
      \includegraphics{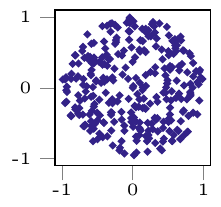}\\%
      \includegraphics{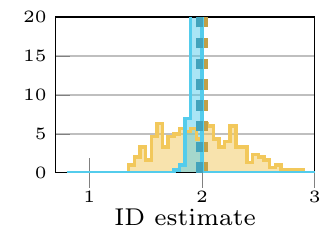}%
      \caption{Case 1}
    \end{subfigure}
    \hfill
    \begin{subfigure}[t]{.25\linewidth}\centering
      \includegraphics{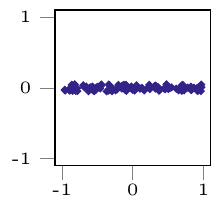}\\%
      \includegraphics{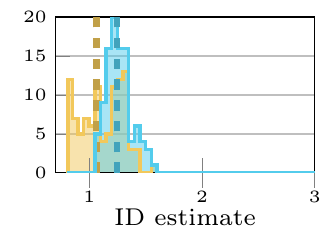}%
      \caption{Case 2}
    \end{subfigure}
    \hfill
    \begin{subfigure}[t]{.45\linewidth}\centering
      \includegraphics{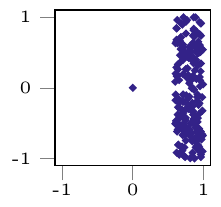}\\%
      \includegraphics{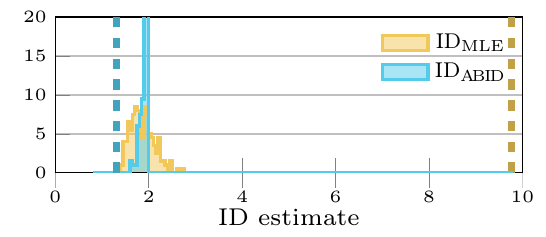}%
      \caption{Case 3}
    \end{subfigure}
    \caption{Three data sets with corresponding \ID estimate histograms.
    The dashed vertical lines correspond to the \ID estimate of the point at $(0,0)$.}
    \label{plot:showcase}
\end{figure}

Whilst existing work focuses on the analysis of distances in enclosing neighborhoods,
our novel approach utilizes the distribution of pairwise angles of neighboring points.
The different nature of the resulting \ID estimates is showcased in \reffig{plot:showcase}.
Both the distance\-/based (\sMLE, \cite{doi:10.1214/aos/1176343247}) and the angle\-/based (\sGAbider, this article)
approach consider Case~2 to be dominantly one-dimensional and Cases 1~and~3 to be mostly two-dimensional.
The outlying point $(0,0)$ in Case~3, however, is judged very differently by both approaches.
The distance\-/based approach \sMLE considers its environment to be almost ten-dimensional (${\approx}9.78$)
as all distances are very close, whereas our novel angle\-/based approach \sGAbider considers it to be one-dimensional (${\approx}1.18$) as the observed neighborhood lies in a narrow cone, similar to Case~2.
With increased neighborhood size this cone widens and the \ID gets closer to 2. This effect is similar to visual details of a surface disappearing at a distance.
In the example of a point on the surface of a $d$-ball mentioned above, the angle\-/based approach
is hence inclined to give an estimate below $d$ in contrast to distance\-/based approaches.

We will now lay the mathematical foundations
for our novel ID estimator.
When estimating the ID of a given point from a data set,
the general approach is to consider a number of nearby points
as an enclosing neighborhood.
An assumption shared by all ID estimators is that this
neighborhood should be \enquote{representative} of an underlying
manifold.
In approaches like GED \cite{DBLP:conf/icdm/HouleKN12}, \enquote{representative} can be
understood as uniformly sampled from the parameter space.
We hence assume that nearby points behave like a uniformly
random sampled $d$-dimensional subspace of the embedding
space, where $d$ is the dimension of some manifold, representing
the sampled parameter space.
When analyzing the neighborhood of some point $x$, we can
compute the angles between the directional vectors
$x_i{-}x$ for all points $x_i$ in the neighborhood
of $x$.
As the angles are independent of the lengths of these
vectors, we can without loss of generality assume that
they lie on a unit sphere.
We are therefore interested in the angles between points
sampled uniformly random from some unit $d$-sphere.
It is noteworthy that the unit $d$-sphere contains all
unit $d'$-spheres with $d'{\leq}d$ as a subset.
Hence, this assumption holds for any intrinsic dimensionality
equal to or lower than that of the embedding space
whenever the embedding is locally linear.
Assuming that all vectors describing the
neighborhood lie on a unit sphere, we can use the distribution
of pairwise angles provided by Cai, Fan, and Jiang~\cite{DBLP:journals/jmlr/CaiFJ13}.

\begin{theorem}[Distribution of random angles in a \boldmath$d$-sphere]
The distribution of angles $\theta$ between two
random points sampled independently and uniformly from a $d$-sphere
converges, as the number of samples goes to infinity, to
\begin{align}
P(\theta)
&= \frac{\Gamma(\tfrac{d}{2})}
	{\Gamma(\tfrac{1}{2})\Gamma(\tfrac{d-1}{2})}
	\cdot \sin(\theta)^{d-2}
\label{eq:angle_dist}
\end{align}
where $\Gamma$ is the gamma function
and $\theta$ is defined on $[0,\pi]$.
\end{theorem}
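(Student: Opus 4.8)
The plan is to use the rotational invariance of the uniform measure on the sphere to collapse the problem to a one-dimensional marginal density, compute that density explicitly, and then change variables to $\theta$; the stated ``number of samples goes to infinity'' is then handled by a routine second-moment argument for the empirical distribution over all pairs.

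\textbf{Reduction by symmetry.} Read a ``$d$-sphere'' as the unit sphere $S^{d-1}=\{x\in\mathbb{R}^d:\|x\|=1\}$ with its normalized surface measure $\sigma$. Let $X,Y$ be independent and $\sigma$-distributed and $\theta=\arccos\langle X,Y\rangle\in[0,\pi]$. Conditioning on $X$ and using that $\sigma$ is invariant under every rotation, the conditional law of $\cos\theta=\langle X,Y\rangle$ given $X$ does not depend on $X$, so we may take $X=e_1$; then $\cos\theta=Y_1$, the first coordinate of a uniform point on $S^{d-1}$. Thus it suffices to find the density of $Y_1$ and push it forward through $\theta=\arccos(Y_1)$.

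\textbf{The marginal density.} The cleanest route uses the Gaussian representation $Y\stackrel{d}{=}G/\|G\|$ with $G\sim\mathcal{N}(0,I_d)$: then $Y_1^2=G_1^2/\bigl(G_1^2+\sum_{j\ge2}G_j^2\bigr)$ is a ratio of independent $\chi^2_1$ and $\chi^2_{d-1}$ variables over their sum, hence $Y_1^2\sim\mathrm{Beta}(\tfrac12,\tfrac{d-1}{2})$; since $Y_1$ is symmetric about $0$, undoing the two-to-one map $t\mapsto t^2$ gives $f_{Y_1}(t)=\frac{\Gamma(d/2)}{\Gamma(1/2)\Gamma((d-1)/2)}(1-t^2)^{(d-3)/2}$ on $(-1,1)$. (Alternatively this follows from the coarea formula: the level set $\{Y_1=t\}$ is an $S^{d-2}$ of radius $\sqrt{1-t^2}$, of area $\propto(1-t^2)^{(d-2)/2}$, and the spherical gradient of $x_1$ has norm $\sqrt{1-t^2}$, contributing a factor $(1-t^2)^{-1/2}$.) Substituting $t=\cos\theta$, $|dt|=\sin\theta\,d\theta$, $1-t^2=\sin^2\theta$ converts $(1-t^2)^{(d-3)/2}\,|dt|$ into $\sin(\theta)^{d-2}\,d\theta$, which is exactly \eqref{eq:angle_dist}; the constant is consistent since $\int_0^\pi\sin(\theta)^{d-2}\,d\theta=B(\tfrac12,\tfrac{d-1}{2})=\Gamma(\tfrac12)\Gamma(\tfrac{d-1}{2})/\Gamma(\tfrac d2)$.

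\textbf{Empirical convergence.} To match the ``$n\to\infty$'' phrasing, let $\theta_{ij}$ be the pairwise angles of $n$ i.i.d.\ uniform points and $F_n(t)=\binom{n}{2}^{-1}\sum_{i<j}\mathbf{1}\{\theta_{ij}\le t\}$. Each $\theta_{ij}$ has the law above, so $F_n(t)$ is unbiased for $F(t)=\int_0^t P(\theta)\,d\theta$; indicators for index pairs that are disjoint are independent, only the $O(n^3)$ pairs of pairs sharing one index contribute covariance, so $\var{F_n(t)}=O(1/n)$, and pointwise $L^2$-convergence together with monotonicity of $F$ upgrades to uniform (a.s.)\ convergence of $F_n$ to $F$, hence of the histograms to the density $P$.

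I expect the marginal-density step to be the only real obstacle: everything else is symmetry or a one-line variance bound. The care point there is exponent bookkeeping --- the $\mathrm{Beta}$ law is on $Y_1^2$ rather than $Y_1$, so one must account correctly for the factor $|t|\,(t^2)^{-1/2}=1$ from the change of variables, and then verify the normalizing constant via the Beta--Gamma identity. It is worth sanity-checking the formula at $d=2$ (density $\propto\sin(\theta)^0$, i.e.\ uniform on $[0,\pi]$, matching the ``all directions seen evenly'' picture) and noting the degenerate $d=1$ case, where the angle is $0$ or $\pi$.
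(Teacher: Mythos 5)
Your proof is correct, but it is worth noting that the paper does not actually prove this theorem at all: its ``proof'' is a one-line citation to Cai, Fan, and Jiang. What you supply is therefore a genuinely self-contained argument, and a clean one: rotational invariance reduces the pairwise angle to the first coordinate of a single uniform point on $S^{d-1}$, the Gaussian representation $Y=G/\|G\|$ identifies $Y_1^2$ as a $\mathrm{Beta}(\tfrac12,\tfrac{d-1}{2})$ ratio of independent chi-squares, and the two changes of variables ($t\mapsto t^2$ and $t=\cos\theta$) produce exactly $\sin(\theta)^{d-2}$ with the stated Beta--Gamma normalizing constant. I checked the exponent bookkeeping and the constant; both are right, and your reading of ``$d$-sphere'' as $S^{d-1}\subset\mathbb{R}^d$ is the one consistent with the paper's formula (and with the later $\var{C}=\tfrac1d$ corollary). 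You also do something the cited statement glosses over: you separate the exact marginal law of a single pair from the ``number of samples goes to infinity'' phrasing, which really concerns the empirical distribution over the $\binom{n}{2}$ dependent pairs, and you dispose of that with the standard $O(n^3)$-overlapping-pairs covariance bound for U-statistics. The only cosmetic caveat is that upgrading pointwise $L^2$-convergence to almost-sure uniform convergence needs a Borel--Cantelli or subsequence step beyond the variance bound alone, but that is routine and does not affect correctness. In short, what the paper buys by citation --- brevity --- you trade for a derivation a reader can verify line by line, and your version additionally makes precise in what sense the histogram of pairwise angles converges to the density.
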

\begin{proof}
See Cai, Fan, and Jiang~\cite{DBLP:journals/jmlr/CaiFJ13} for a detailed proof.
\end{proof}

Because angles are invariant of the vector lengths, this also holds
for points sampled from a $d$-ball instead of the sphere as well as
other rotation invariant distributions such as spherical Gaussians,
as long as the origin point is not included in the data
(for which the angle is undefined). Note that such points at
distance 0 cause problems for most estimators of intrinsic
dimensionality and are commonly removed for such estimators, too.

As popularly known from the \textit{curse of dimensionality},
all angles tend to become approximately orthogonal as dimensionality approaches infinity.
This causes Eq.~\eqref{eq:angle_dist} to concentrate around~$\frac\pi2$~\cite{DBLP:journals/jmlr/CaiFJ13}.
The distribution above is unwieldy and expensive to compute (as we need to compute the arcus cosines).
We, therefore, prefer to work directly on the cosines.
By applying the Legendre duplication formula
and doing a change of variables, we obtain the distribution
of cosines.

\begin{theorem}[Distribution of cosine similarities of points in a \boldmath$d$-sphere]\label{theorem:cos-dist}
The distribution of pairwise cosine similarities $C$ between
random points sampled independently and uniformly from a $d$-sphere
is
\begin{align}
P(C)
&= \tfrac{1}{2} B(\tfrac{1+C}{2};\tfrac{d-1}{2},\tfrac{d-1}{2})
\end{align}
where $B(x;\alpha,\beta)$ is the beta distribution p.d.f.{}
and $C$ is defined on $[-1,1]$.
\end{theorem}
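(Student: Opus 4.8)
The plan is to derive $P(C)$ as the pushforward of the angle density in \refeqn{eq:angle_dist} under the substitution $C=\cos\theta$, and then to recognize the result as the stated rescaled beta density. Since $\theta$ ranges over $[0,\pi]$, the map $\theta\mapsto\cos\theta$ is a decreasing bijection onto $[-1,1]$ with $\sin\theta=\sqrt{1-C^2}\ge 0$ and $\bigl|\tfrac{d\theta}{dC}\bigr|=(1-C^2)^{-1/2}$. Substituting $\sin(\theta)^{d-2}=(1-C^2)^{(d-2)/2}$ and multiplying by the Jacobian, the two powers of $(1-C^2)$ combine and I obtain
\begin{align}
P(C)=\frac{\Gamma(\tfrac{d}{2})}{\Gamma(\tfrac12)\,\Gamma(\tfrac{d-1}{2})}\,(1-C^2)^{(d-3)/2}
\end{align}
on $[-1,1]$.

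Next I would expand the target. Writing the beta p.d.f.\ as $B(x;\alpha,\beta)=x^{\alpha-1}(1-x)^{\beta-1}/\mathrm{B}(\alpha,\beta)$ with $x=\tfrac{1+C}{2}$ and $\alpha=\beta=\tfrac{d-1}{2}$, the factors $\bigl(\tfrac{1+C}{2}\bigr)^{(d-3)/2}$ and $\bigl(\tfrac{1-C}{2}\bigr)^{(d-3)/2}$ multiply to $\bigl(\tfrac{1-C^2}{4}\bigr)^{(d-3)/2}$, so that
\begin{align}
\tfrac12\,B\!\left(\tfrac{1+C}{2};\tfrac{d-1}{2},\tfrac{d-1}{2}\right)
&=\frac{(1-C^2)^{(d-3)/2}}{2^{\,d-2}\,\mathrm{B}\!\left(\tfrac{d-1}{2},\tfrac{d-1}{2}\right)} \\
&=\frac{\Gamma(d-1)\,(1-C^2)^{(d-3)/2}}{2^{\,d-2}\,\Gamma(\tfrac{d-1}{2})^2},
\end{align}
where the last step uses $\mathrm{B}(\alpha,\beta)=\Gamma(\alpha)\Gamma(\beta)/\Gamma(\alpha+\beta)$. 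The $(1-C^2)$-dependence already matches the previous display, so only the constants remain to be reconciled.

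Equating the two constants amounts to the identity $\Gamma(d-1)=2^{\,d-2}\Gamma(\tfrac{d-1}{2})\Gamma(\tfrac{d}{2})/\sqrt{\pi}$, which is precisely the Legendre duplication formula $\Gamma(z)\Gamma(z+\tfrac12)=2^{1-2z}\sqrt{\pi}\,\Gamma(2z)$ evaluated at $z=\tfrac{d-1}{2}$, together with $\Gamma(\tfrac12)=\sqrt{\pi}$. This gamma-function bookkeeping is the only nontrivial ingredient; I do not expect a real obstacle beyond correctly applying the duplication formula, and in particular no separate normalization check is needed, since the first display is already a bona fide density (the pushforward of one) — it is nonnegative and integrates to $1$ by construction. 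Chaining the three steps then yields $P(C)=\tfrac12 B\!\left(\tfrac{1+C}{2};\tfrac{d-1}{2},\tfrac{d-1}{2}\right)$ as claimed.
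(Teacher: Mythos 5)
Your proof is correct and uses essentially the same two ingredients as the paper's: the change of variable $C=\cos\theta$ applied to \refeqn{eq:angle_dist} and the Legendre duplication formula, merely in the opposite order (the paper first rewrites the normalizing constant via the duplication formula and then substitutes, whereas you substitute first and reconcile the constants at the end). The computations check out, so this is the same argument in a slightly rearranged presentation.
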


\newcommand\invbeta{\raisebox{4pt}{$\smash{\tfrac{1}{B(\tfrac{d-1}{2},\tfrac{d-1}{2})}}$}\rule[-5pt]{0pt}{17pt}}
\begin{proof}
For this proof, we modify %
the well known Legendre duplication formula:
\begin{align}
\Gamma(x)\Gamma(x+\tfrac{1}{2})
&= 2^{1-2x}\Gamma(\tfrac{1}{2})\Gamma(2x)
\notag\\
\frac{\Gamma(x+\tfrac{1}{2})}{\Gamma(x)\Gamma(\tfrac{1}{2})}
&= \frac{2^{1-2x}\Gamma(2x)}{\Gamma(x)^2}
= \frac{1}{B(x,x)} \cdot \tfrac{1}{2}^{2x-1}
\label{eq:legendre}
\shortintertext{
where $B(\cdot,\cdot)$ is the beta function.
By using this in
Eq.~\eqref{eq:angle_dist} for $x{=}\tfrac{d-1}2$, we obtain
}
P(\theta)
&= \invbeta
\cdot \left(\tfrac12{\sin(\theta)}\right)^{d-2}
\notag
\shortintertext{
We can now substitute $\theta$ with $\arccos(C)$ by a
change of variable:
}
P(C)
&= \invbeta
\cdot \left(\tfrac12{\sin(\arccos(C))}\right)^{d-2}
\cdot \left\vert \tfrac{\partial}{\partial C} \arccos(C) \right\vert
\notag\\
&= \invbeta
\cdot (\tfrac12{\sqrt{1-C^2}})^{d-2}
\cdot \tfrac{1}{\sqrt{1-C^2}}
\notag\\
&= \invbeta
\cdot (\tfrac{(1-C)(1+C)}{2 \cdot 2})^{\tfrac{d-2}{2}}
\cdot ((1-C)(1+C))^{-\tfrac{1}{2}}
\notag\\
&= \invbeta
\cdot (1-\tfrac{1+C}{2})^{\tfrac{d-1}{2}-1}
\cdot (\tfrac{1+C}{2})^{\tfrac{d-1}{2}-1}
\cdot \tfrac{1}{2}
\notag\\[1ex]
&= \tfrac{1}{2} B\left(\tfrac{1+C}{2};\tfrac{d-1}{2},\tfrac{d-1}{2}\right)
\notag
\end{align}
which is a beta distribution rescaled to the interval $[-1,1]$, on which $C$ is defined.
\end{proof}

Based on this, we can easily obtain the following helpful corollary:

\begin{corollary}\label{theorem:cor-var}
The average cosine similarity of two
random points sampled independently and uniformly from a $d$-ball
is given by
\vspace{-\abovedisplayskip}\vspace{3pt}%
\begin{align*}
\mathbb{E}[C] &= 0
\;.
\shortintertext{The variance and the non-central second moment are given by}
\var{C} &= \mathbb{E}[C^2] = \tfrac{1}{d}
\;.
\end{align*}
\end{corollary}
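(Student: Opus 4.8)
The plan is to read the two moments directly off Theorem~\ref{theorem:cos-dist}, after first reducing the two\=/point problem to a one\=/point problem. First I would note that, since the cosine similarity is scale invariant, we may assume (as remarked after \refeqn{eq:angle_dist}) that both sampled points lie on the unit $d$-sphere; then, by rotational invariance of the uniform distribution on the sphere, we may condition on the second point and rotate it to a fixed pole $e$. With that, $C$ is exactly the coordinate of the first point along $e$, i.e. $C$ is distributed as the first coordinate $u_1$ of a point $u$ drawn uniformly from the unit $d$-sphere; equivalently, by Theorem~\ref{theorem:cos-dist}, $\tfrac{1+C}{2}$ follows a $\mathrm{Beta}(\tfrac{d-1}{2},\tfrac{d-1}{2})$ law, so $C$ has the even density $P(C)\propto(1-C^2)^{(d-3)/2}$ on the symmetric interval $[-1,1]$.

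For the mean I would simply invoke this symmetry: the density is an even function of $C$ on $[-1,1]$ (equivalently, $u\mapsto-u$ preserves the uniform measure, or the $\mathrm{Beta}$ above is symmetric about $\tfrac12$), hence all odd moments vanish and in particular $\expect{C}=0$.

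For the second moment I would use exchangeability of the coordinates: since $u$ is a unit vector, $\sum_{i=1}^{d}u_i^2=1$ almost surely, and the coordinates are exchangeable under the uniform measure, so $d\cdot\expect{u_1^2}=\expect{\sum_i u_i^2}=1$, giving $\expect{C^2}=\expect{u_1^2}=\tfrac1d$. (Alternatively one can substitute $\alpha=\beta=\tfrac{d-1}{2}$ into the $\mathrm{Beta}$ variance formula $\tfrac{\alpha\beta}{(\alpha+\beta)^2(\alpha+\beta+1)}$, obtaining variance $\tfrac{1}{4d}$ for $\tfrac{1+C}{2}$ and hence $\var{C}=\tfrac1d$; or integrate $C^2 P(C)$ against the Beta density directly.) Combining the two facts yields $\var{C}=\expect{C^2}-\expect{C}^2=\tfrac1d-0=\tfrac1d$, which completes the proof.

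I do not expect a genuine obstacle here: every step is elementary. The only point that deserves to be spelled out is the reduction from two random points to one coordinate of a single point, which is immediate from rotational invariance but should be stated explicitly so that the appeal to Theorem~\ref{theorem:cos-dist} is rigorous. It is also worth remarking on the degenerate case $d=1$, where the $1$-sphere is $\{-1,+1\}$ and $C=\pm1$ with equal probability, for which the formulas still read $\expect{C}=0$ and $\expect{C^2}=1=\tfrac1d$, so the statement holds for all $d\geq1$.
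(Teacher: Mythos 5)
Your proposal is correct, and for the second moment it takes a genuinely different (and arguably cleaner) route than the paper. The paper's proof is purely a read\-/off from Theorem~\ref{theorem:cos-dist}: since $\tfrac{1+C}{2}\sim B(\tfrac{d-1}{2},\tfrac{d-1}{2})$ is a symmetric beta distribution, its mean $\tfrac12$ gives $\expect{C}=0$, and plugging $\alpha=\beta=\tfrac{d-1}{2}$ into the beta variance formula gives $\var{\tfrac{1+C}{2}}=\tfrac{1}{4d}$, hence $\var{C}=\expect{C^2}=\tfrac1d$. Your mean argument (symmetry) is the same in substance, and your parenthetical alternative for the variance is exactly the paper's computation. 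What you add is the reduction, via rotational invariance, of $C$ to the first coordinate $u_1$ of a single uniform point on the sphere, followed by the exchangeability identity $d\cdot\expect{u_1^2}=\expect{\sum_i u_i^2}=1$. This buys two things: it avoids having to recall or verify the beta variance formula, and it covers the degenerate case $d=1$ uniformly, where the parametrization $B(0,0)$ of Theorem~\ref{theorem:cos-dist} breaks down but the coordinate argument still applies (as you note, $C=\pm1$ with equal probability there). The paper's version is shorter given that Theorem~\ref{theorem:cos-dist} has already been established; yours is more self\-/contained and slightly more robust. Both are valid proofs of the corollary.
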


\begin{proof}
This follows immediately from the central moments of beta distributions.
By Theorem~\ref{theorem:cos-dist} we have
$\tfrac{1+C}{2}{\sim} B(\tfrac{d-1}{2}, \tfrac{d-1}{2})$.
This \emph{symmetric} beta distribution has a mean of $\tfrac12$, and hence
$\mathbb{E}[\nonDiagCMat]{=}0$.
The variance of this beta distribution given
$\smash{\alpha{=}\beta{=}\tfrac{d-1}{2}}$
is $\smash{\var{\tfrac{1+C}{2}}{=}\tfrac{1}{4d}}$,
and hence
$\smash{%
\expect{(\tfrac{1+C}{2}-\tfrac{1}{2})^2}
{=}\expect{\tfrac{C^2}{4}}
{=}\tfrac{1}{4d}}$.
Because the mean is 0, the variance and the second non-central moment agree trivially.
\end{proof}

\section{Estimating Intrinsic Dimensionality}\label{sec:estimator}

Based on this theoretical distribution of cosine similarities
in a $d$-ball,
we propose new estimators of intrinsic dimensionality
based on the method of moments.
Similar to other estimators, we assume the data sample comes
from the local neighborhood of a point; usually from a ball.
The first moment of Corollary~\ref{theorem:cor-var} cannot be
used for estimation because it does not depend on $d$.
Both the variance and the second non-central moment, however,
are suitable for estimating intrinsic dimensionality,
as they depend inversely on $d$.
This simple dependency stands in contrast to the expansion-rate
based approaches, which generally obtain an exponential
relation to the dimensionality, as the volume of a $d$-ball
has $d$ in its exponent.
With this simpler dependency on $d$, we hope
to obtain a more robust measure even with smaller neighborhood sizes (fewer samples);
and as we do not need to compute logarithms it can be computed
more efficiently.
But we still have two choices: we can either estimate using the
variance $\smash{\hat{d}{=}1/\var{\nonDiagCMat}}$ or using the non-central
second moment $\smash{\hat{d}{=}1/\mathbb{E}[\nonDiagCMat^2]}$, which only
agrees if $\smash{\mathbb{E}[\nonDiagCMat]{=}0}$ as expected for a uniform ball.

Consider the scenario of many points sampled
from a hyperplane, but the point of interest is not on this hyperplane.
The local neighborhood will then consist of
samples in a circular region on this plane. If we move the point of
interest away from the plane, the average cosine between
the samples tends to~1, and the variance to~0.
The variance\-/based estimate would hence tend to infinity.
The second non-central moment will,
as the average cosine tends to~1, also tend to~1,
and we estimate $d{\rightarrow} 1$. We argue that this is the
more appropriate estimate, as the data concentrates in a single
far away area.

Inspired by the work of Amsaleg et al.~\cite{DBLP:conf/sdm/AmsalegCHKRT19},
we investigated the idea of also considering the reflections of
all points with respect to the point of interest. Such a reflection
would cause the average cosine in this example to be 0, as every pair of points
can be matched to the pair with the second point reflected.
In the above example, we would obtain two opposite discs of points
and the resulting variance would tend to 1.
The estimates of
the variance\-/based estimator would thus agree with the non-central moment.
We can show that when adding reflected points,
the variance and the non-central second moment become equivalent
(which could serve as additional justification for the approach
of \cite{DBLP:conf/sdm/AmsalegCHKRT19}):
Since $c(x_i,-x_j){=}{-c(x_i,x_j)}{=}c(-x_i,x_j)$, taking reflections
into account simply means that we obtain two positive and two negative
copies of each cosine. The resulting average then is always exactly 0,
and hence $\var{\nonDiagCMat^\prime}
{=}\mathbb{E}[\nonDiagCMat^{\prime2}]-\mathbb{E}[\nonDiagCMat^\prime]^2
{=}\mathbb{E}[\nonDiagCMat^{\prime2}]
{=}\mathbb{E}[\nonDiagCMat^2]$.
We, therefore, do not further consider using such reflections of points,
besides their implicit presence in the non-central second moment.

Up to this point, we have been working with the limit cases of
distributions. In the following, we now change to working with
a fixed data sample of $k$ points, centered around a point of interest.
For simplicity, we assume that the data has been translated, such that the
point of interest is always at the origin, and that this point
(as well as any duplicates of it) has been removed from the sample.
We will now work with \emph{all pairwise} cosine similarities
in a $k{\times}k$ matrix denoted~\nonDiagCMat{}.
The diagonal of this matrix is usually excluded from computations.
We use the term \CMat{} when the ones on the diagonal are to be included.
By $\nonDiagCMat^2$, we denote the individual squaring of cosines.
The next theorem will use both a fixed sample and the matrix \CMat{} with the diagonal included.

\begin{theorem}[Upper bound]\label{theorem:fullMatUpper}
Let $X = \{x_1, \ldots, x_k\} \subset \mathbb{R}^D$ be
a sample from a $d$-dimensional subspace embedded in $\mathbb{R}^D$
for some $d \leq D$.
Formally, let $X$ contain at least $d$ linearly
independent vectors and let all $x_i$ be linear
combinations of a given set of $d$ orthonormal basis vectors.
Then the following inequality holds
\begin{align*}
\expect{\CMat^2}^{-1} \leq d
\;.
\end{align*}
\end{theorem}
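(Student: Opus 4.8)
The plan is to read $\expect{\CMat^2}$ as the empirical average of the $k^2$ entries of the entrywise squared matrix $\CMat^2$, and then to bound that average below by a rank argument on a Gram matrix. First I would normalise: since any duplicates of the origin were removed, each $u_i := x_i/\|x_i\|$ is a well-defined unit vector, $\CMat_{ij} = \langle u_i, u_j\rangle$, and therefore
\[
k^2\,\expect{\CMat^2} \;=\; \sum_{i,j=1}^{k} \langle u_i,u_j\rangle^2 .
\]
Collecting the $u_i$ as the columns of a matrix $U \in \mathbb{R}^{D\times k}$, the right-hand side is exactly $\|U^\top U\|_F^2 = \trace{(U^\top U)^2}$, i.e.\ the sum of squares of the entries of the Gram matrix of the directions.

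The key step is to pass to the "outer" matrix $M := U U^\top = \sum_{i=1}^{k} u_i u_i^\top$, a positive semidefinite $D\times D$ matrix that shares all its nonzero eigenvalues with $U^\top U$, so $\trace{(U^\top U)^2} = \trace{M^2}$. Two properties of $M$ carry the argument: its trace is $\trace{M} = \sum_i \|u_i\|^2 = k$; and, since every $x_i$ — hence every $u_i$ — is a linear combination of the $d$ given orthonormal basis vectors, $\operatorname{rank} M \le d$. Thus $M$ has eigenvalues $\lambda_1,\dots,\lambda_m \ge 0$ with $m \le d$ and $\sum_{\ell=1}^{m} \lambda_\ell = k$.

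Applying the Cauchy--Schwarz (equivalently, power-mean) inequality to these eigenvalues then gives
\[
\trace{M^2} \;=\; \sum_{\ell=1}^{m}\lambda_\ell^2 \;\ge\; \frac{1}{m}\Bigl(\sum_{\ell=1}^{m}\lambda_\ell\Bigr)^{2} \;=\; \frac{k^2}{m} \;\ge\; \frac{k^2}{d},
\]
whence $\expect{\CMat^2} = k^{-2}\trace{M^2} \ge 1/d$, and inverting yields $\expect{\CMat^2}^{-1} \le d$, as claimed.

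I do not expect a genuine obstacle; essentially all the work is bookkeeping. The points that need care are: that $\expect{\CMat^2}$ averages over all $k^2$ entries, the diagonal ones contributing the $\|u_i\|^2 = 1$ terms, which is precisely what makes $\trace{M} = k$ come out cleanly (this is why the theorem is stated with $\CMat$ rather than $\nonDiagCMat$); that $U^\top U$ and $U U^\top$ have the same nonzero spectrum; and that the hypothesis is used only through $\operatorname{rank} M \le d$, so the assumption that $X$ contains $d$ linearly independent vectors is not actually needed for this direction — it only certifies that $d$ is the intended target rather than something smaller. It is also worth recording that the bound is tight: equality forces $m = d$ with all nonzero $\lambda_\ell$ equal, i.e.\ the directions $u_i$ forming a tight frame for the $d$-dimensional subspace, which is exactly the isotropic situation underlying Theorem~\ref{theorem:cos-dist} and Corollary~\ref{theorem:cor-var}.
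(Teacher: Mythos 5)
Your proposal is correct and follows essentially the same route as the paper: both arguments identify $k^2\expect{\CMat^2}$ with the sum of squared eigenvalues of the Gram matrix of the normalized directions, use $\trace{\CMat}=k$ together with rank at most $d$, and apply the power-mean (Cauchy--Schwarz) inequality to get $\|\CMat\|_2^2\geq k^2/d$. The only differences are cosmetic (you work with $UU^\top$ in the ambient space rather than the paper's explicit $k\times d$ change-of-basis matrix) plus two worthwhile side remarks the paper omits: that only $\operatorname{rank}\leq d$ is needed, and the characterization of the equality case as a tight frame.
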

\begin{proof}
Let $\tilde{X}$ be the $k{\times}d$ matrix obtained from $X$
by first performing a change of basis to the
given orthonormal basis of size $d$,
then normalizing each vector to unit length
to produce $\tilde{x}_i$.
Neither the change of basis (which is a rotation)
nor the posterior normalization affects the cosine similarities,
and we hence have
\begin{align}
c(\tilde{x}_i,\tilde{x}_j) = c(x_i,x_j)
\;.
\label{eq:cosChangeOfBasis}
\end{align}
It immediately follows that $\tilde{X}$ has a rank of $d$,
as we still have $d$ linearly independent vectors. 
The matrix $\tilde{\CMat} {=} \tilde{X}\tilde{X}^T$ then
contains entries of the form
$\langle \tilde{x}_i, \tilde{x}_j \rangle$.
As all $\tilde{x}_i$ are normalized, $\tilde{\CMat}$ is equal to
the cosine similarities.
Per Eq.~\eqref{eq:cosChangeOfBasis} it then follows that
$\tilde{\CMat}$ is exactly $\CMat$.
Because \CMat{} is a cosine similarity matrix, the diagonal
entries are all $1$, and we have $\trace{\CMat}{=}k$. 
Since $\tilde{X}$ is a $k{\times}d$ matrix with rank $d$,
we know that the rank of \CMat{} is $d$ as well.
Therefore \CMat{} has $d$~eigenvalues $\lambda_1, \ldots,
\lambda_d$ with $\sum_{i=1}^d \lambda_i {=} \trace{\CMat} {=} k$.
The sum of squared entries $\|\CMat\|_2^2 $
equals the sum of squared eigenvalues $\textstyle\sum\nolimits_{i=1}^d \lambda_i^2$
and is minimized if every eigenvalue equals~$\frac{k}{d}$,
which means we have the following lower bound:
\begin{align}
\|\CMat\|_2^2 &=
\textstyle\sum\nolimits_{i=1}^d \lambda_i^2
\geq d \cdot \left(\tfrac{k}{d}\right)^2
= \tfrac{k^2}{d}
\label{eq:bound}
\end{align}
and by taking the inverse we obtain the upper bound
$\expect{\CMat^2}^{-1} \leq d$.
\end{proof}

This is an upper bound for estimating the intrinsic dimensionality using \CMat{},
and we can use this to also obtain an upper bound for \nonDiagCMat{}.

\begin{corollary}\label{theorem:nonDiagUpper}
Let $X = \{x_1, \ldots, x_k\} \subset \mathbb{R}^D$ be
a sample from a $d$-dimensional subspace embedded in $\mathbb{R}^D$
as defined in Theorem~\ref{theorem:fullMatUpper}.
If $k>d$, then %
\begin{align}
\expect{\nonDiagCMat^2}^{-1} &\leq \tfrac{k-1}{k-d} \cdot d
\;.
\end{align}
\end{corollary}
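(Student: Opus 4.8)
The plan is to relate the two averages $\expect{\CMat^2}$ and $\expect{\nonDiagCMat^2}$ by a simple accounting of diagonal versus off-diagonal entries, and then invoke the lower bound on $\|\CMat\|_2^2$ already established in \refeqn{eq:bound}. Writing $c_{ij} {=} c(x_i,x_j)$, the matrix \CMat{} has $k$ diagonal entries equal to $1$ and $k(k-1)$ off-diagonal entries, so $\|\CMat\|_2^2 {=} \sum_{i,j} c_{ij}^2 {=} k + \sum_{i\neq j} c_{ij}^2 {=} k + \|\nonDiagCMat\|_2^2$. Reading $\expect{\CMat^2}$ as the mean of all $k^2$ squared entries and $\expect{\nonDiagCMat^2}$ as the mean of the $k(k-1)$ off-diagonal squared entries, this identity becomes $k^2 \expect{\CMat^2} {=} k + k(k-1)\,\expect{\nonDiagCMat^2}$.

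Next I would solve for $\expect{\nonDiagCMat^2}$ and substitute the lower bound $\|\CMat\|_2^2 {=} k^2\expect{\CMat^2} \geq k^2/d$ from \refeqn{eq:bound}, which applies because the hypotheses placed on $X$ here are exactly those of Theorem~\ref{theorem:fullMatUpper}. This yields
\begin{align*}
\expect{\nonDiagCMat^2}
= \frac{k^2\expect{\CMat^2} - k}{k(k-1)}
\geq \frac{k^2/d - k}{k(k-1)}
= \frac{k-d}{d(k-1)}
\;.
\end{align*}
Finally, since $k > d$ the right-hand side is strictly positive, so $\expect{\nonDiagCMat^2} > 0$ and inversion reverses the inequality; taking reciprocals gives $\expect{\nonDiagCMat^2}^{-1} \leq \frac{d(k-1)}{k-d} = \frac{k-1}{k-d}\cdot d$, as claimed.

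I do not expect a genuine obstacle here. The only points needing a little care are the bookkeeping that isolates the $k$ unit diagonal entries from the off-diagonal ones (this is what produces the correction factor $\frac{k-1}{k-d}$, which $\to 1$ as $k\to\infty$, matching the limiting behaviour of Corollary~\ref{theorem:cor-var}), and the observation that the assumption $k > d$ is exactly what is needed both to guarantee $\expect{\nonDiagCMat^2} > 0$ (so the estimator is well defined and the bound non-vacuous) and to justify the final inversion step.
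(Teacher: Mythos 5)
Your proposal is correct and follows essentially the same route as the paper: both subtract the $k$ unit diagonal entries to get $\|\nonDiagCMat\|_2^2 = \|\CMat\|_2^2 - k$, apply the lower bound from \refeqn{eq:bound}, average over the $k^2-k$ off-diagonal cells, and invert. Your extra remarks on why $k>d$ guarantees positivity and justifies the inversion are a slight elaboration of what the paper leaves implicit, but the argument is the same.
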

\begin{proof}
Because the difference between \nonDiagCMat{} and \CMat{} is the diagonal of ones, Eq.~\eqref{eq:bound} yields
\begin{align*}
\|\nonDiagCMat\|_2^2 = \|\CMat\|_2^2 - k
&\geq \tfrac{k^2}{d} - k = \tfrac{k(k-d)}{d}
\shortintertext{and hence the average of the remaining $k^2{-}k$ cells is}
\expect{\nonDiagCMat^2}
&\geq
\tfrac{k-d}{k-1}\cdot\tfrac{1}{d}
\end{align*}
which is equivalent to the inequality above. For $k{=}d$ we obtain a trivial bound.
\end{proof}

The difference of including the diagonal or not vanishes for large enough~$k$.
One could attempt to regularize $\expect{\nonDiagCMat^2}$
with~$\frac{k-1}{k-d}$.
The major problem therein is that we do not know $d$ in advance.
To control the maximal overestimation of~$d$, a sufficiently
large neighborhood can be used to lower the margin of error.
For example, to bound $\expect{\nonDiagCMat^2}^{-1}\leq d {+} c$,
at least
$k \geq \tfrac{1}{c}d^2 {+} (1{-}\tfrac{1}{c})d$
neighbors are required.
For the bound $d{+}1$ ($c{=}1$), this means we require $k\geq d^2$ samples.

To solve this self-referential problem, we can also attempt an iterative
refinement. It turns out that the fixed point of this regularization
yields exactly the result we obtain by using \CMat{} instead of \nonDiagCMat{}.
Because using \CMat{} corresponds to using a regularized version and because it has a
very elegant upper bound, we base our method on this estimate:

\begin{definition}[ABID]
Given a data set $X = \{x_1, \ldots, x_n\} {\subset}
\mathbb{R}^D$, the regularized angle\-/based
intrinsic dimensionality estimator
for a point $x_i$ is:
\begin{align*}
\GeomAbider(x_i; k) := \expect{\CMat(B_k(x_i))^2}^{-1}
\end{align*}
where $B_k(x_i)$ are the directional vectors from $x_i$
to the $k$ nearest neighbors of $x_i$
and $\CMat(B_k(x_i))$ are the pairwise cosine similarities
within $B_k(x_i)$.
\end{definition}

By choosing the neighborhood of any point in the specified
set by the $k$ nearest neighbors, the measure is invariant
under scaling.
Analogously, one can instead define the neighborhood by
a maximum distance to the central point.
The sole restriction thereby is that the size of the
neighborhood has to be greater or equal to $d{+}2$ as for
any smaller neighborhood, the estimator does not need
to be properly regularized.
Since the error of the non-regularized estimate is limited
for any neighborhood with size quadratic in the intrinsic
dimension, we further introduce a non-regularized version
for comparative analysis.

\begin{definition}[RABID]
Given a data set $X = \{x_1, \ldots, x_n\} \subset
\mathbb{R}^D$, the \emph{raw} angle\-/based
intrinsic dimensionality estimator
for a point $x_i$ is defined as
\begin{align*}
\StrictAbider(x_i; k) :=
\expect{\nonDiagCMat(B_k(x_i))^2}^{-1}
\end{align*}
where $B_k(x_i)$ are the directional vectors from $x_i$
to the $k$ nearest neighbors of $x_i$
and $\nonDiagCMat(B_k(x_i))$ are the pairwise cosine similarities
of \emph{different} vectors in $B_k(x_i)$.
\end{definition}

Beware that this estimator can cause a division by zero if all $k$ vectors are pairwise orthogonal,
and can return values larger than $k$. In such cases, it is recommended to treat the estimate
as $k$, because the input vectors span a $k$ dimensional subspace.
Nevertheless, this estimator is likely unstable for small $k$, and for large $k$,
it converges to \GeomAbider.

To interpret the estimates by the new method,
it is important to consider the domain they operate on.
The angle\-/based measure is bounded by the spanning dimensionality of the point set.
While distributions of angles are usually distorted by non-linear transformations,
many transformations such as rotations will retain this bound.
Hence the bound may nevertheless apply---at least approximately---for many projections
of lower-dimensional manifolds in higher dimensional embeddings.
It is easy to see that
angle-preserving transformations do not affect our measure,
while distance-preserving transformations will not affect distance\-/based estimators.
Our new measure is less affected by local non-linear
contractions and expansions such as the decreasing density
on the outer parts of Gaussian distributions, but
it tends to estimate higher dimensionality than distance-based-approaches
when the transformations are non-linear.
We do not consider this to be a flaw, just a different design that
may or may not have advantages:
a~common assumption in many methods and applications like manifold
learning is to have locally linear transformations that preserve small neighborhoods,
which will then affect neither angles nor densities.
Our estimator, which can be seen as estimating how many
dimensions such a locally linear embedding needs to have,
is arguably very close to the idea of such applications.

\section{Evaluation}\label{sec:experiments}
In our comparative evaluation, we consider several ID estimators
on many standard evaluation data sets of both artificial and natural origin.
As measures of quality, we analyze the estimated
\ID's consistency both with expected values (for synthetic data)
and with each other (for natural data with no true value).
We will further inspect the stability of \ID estimates
for varying neighborhood sizes.
Depending on the density of data sets, approaches that
require a large neighborhood to stabilize, tend
to be inapplicable.

The histograms shown in this section are limited to a
region of interest in both $x$ and $y$ direction for interpretability.
Outside of the presented range along the $x$-axis,
the distributions always show a smooth drop to zero with no
further peaks but may have a long tail.

\subsection{Reference Estimators}
We compare \GeomAbider and \StrictAbider to
the Hill estimator \sMLE \cite{doi:10.1214/aos/1176343247},
the measure-of-moments-based estimator \sMOM \cite{DBLP:conf/kdd/AmsalegCFGHKN15},
the generalized expansion dimension \sGED \cite{DBLP:conf/icdm/HouleKN12},
the augmented local ID estimator \sALID \cite{tr/nii/ChellyHK16}
and its successor, the tight LID estimator \sTLE \cite{DBLP:conf/sdm/AmsalegCHKRT19}
using the implementations in the ELKI framework~\cite{DBLP:journals/corr/abs-1902-03616}.
All of these alternative estimators are based on the expansion rate.
The \sTLE~is supposed to reduce the necessary sample size
in the neighborhood to acquire a good estimate, yet in our
experiments tends to give higher estimates than the other
distance\-/based approaches.

\subsection{Dimensionality of Fractal Curves}
\begin{figure}[tb!]
\doubleIncludePlot
\plotNoLegend{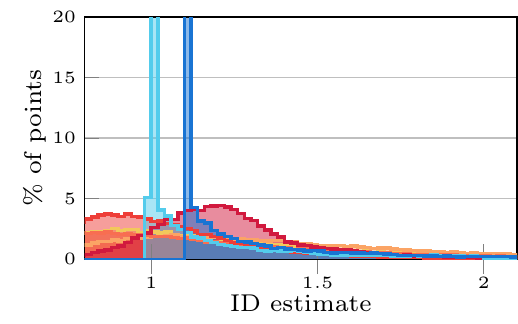}{10 neighbors}
\plotWithLegend{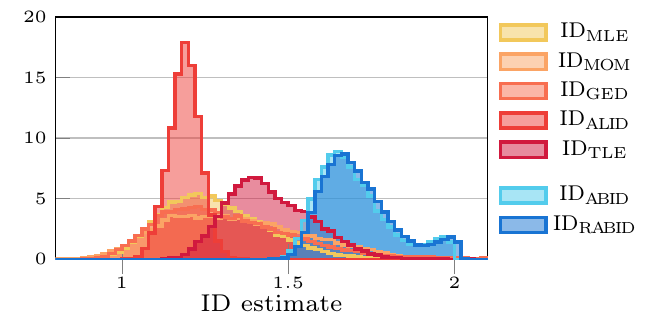}{200 neighbors}
\caption{Histograms of ID estimates of points sampled from
a Koch snowflake.}
\label{plot:Koch200}
\end{figure}
In line with the theoretical foundation of this work
and to demonstrate the different semantics of
angle\-/based and distance\-/based ID estimation, we
analyze the estimated ID of a well-known fractal, the Koch snowflake.
As seen in \reffig{plot:Koch200},
most distance\-/based approaches estimate a dimensionality
roughly around $\smash{\tfrac{\log 4}{\log 3}{\approx}1.26}$,
which is the Hausdorff dimension of the Koch snowflake,
when we consider enough neighbors ($k{=}200$).
This result is not surprising, as the distance\-/based
approaches are conceptually closely
related to classical fractal dimensions.
Our angle\-/based estimates, however, estimate a dimensionality of~${\approx} 1.6$
for larger neighborhoods, which is
larger than the fractal dimension, yet smaller than the representation dimension.
The difference can be explained by the
highly non-linear shape of the snowflake, as two consecutive
line segments are overlapping in a singularity.
Because the points are sampled from a Koch snowflake with finite recursion
depth, they must lie on this non-linear curve.
Reproducing the exact curve from a finite sampling, however,
is highly unlikely as specific parts of the embedding
space, that is $\mathbb{R}^2$, can be approximated to almost arbitrary precision.
The dimensionality of the sampling can, therefore, be
locally indistinguishable from the embedding space.
Where the distance\-/based approaches try to estimate the minimum
dimensionality that can possibly explain a model,
the angle\-/based approaches estimate the minimum
dimensionality from which a model is indistinguishable.
A higher estimate as parameter choice for downstream
applications, such as manifold learning, may turn out to
be more robust.
The results on further fractals,
such as the outline of $n$-flakes, were similar.

It is noteworthy that the scale of the neighborhood
has a large impact on the estimates.
When choosing a neighborhood small enough to mostly
stay within a line segment of the fractals (here $k{=}10$), the ID
estimates approximate $1$, as most neighborhoods
lie on straight lines.
For larger neighborhoods, the estimates approach
a proper representation of the manifold
space.
For too large neighborhoods, however, boundaries
of the point set as well as observing points distant
on the manifold, yet close in the embedding
space, tend to corrupt the estimates.

\subsection{Synthetic Data}
Amsaleg et al. \cite{DBLP:conf/sdm/AmsalegCHKRT19} used a
collection of synthetic and natural data sets, which
they provide for download.
\begin{figure}[tb!]
\doubleIncludePlot
\plotNoLegend{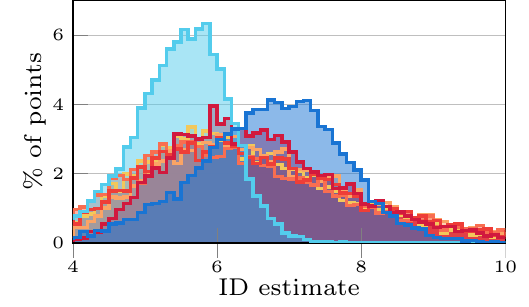}{30 neighbors}
\plotWithLegend{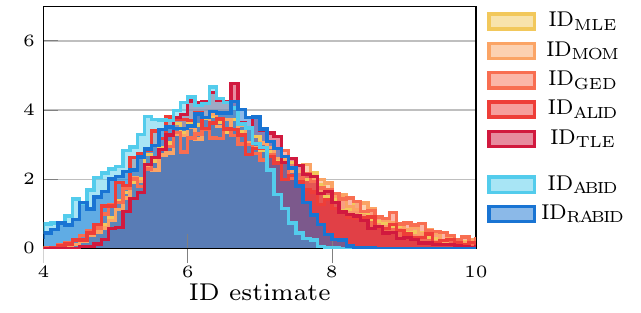}{150 neighbors}
\caption{Histograms of ID estimates of the \sMSet{6} set
with different neighborhood sizes.}
\label{plot:M6}
\end{figure}
\begin{figure}[tb!]
\begin{subfigure}[t]{.25\textwidth}
	\includegraphics{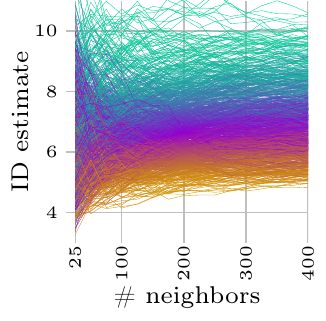}
	\caption{\sMLE}
\end{subfigure}
\begin{subfigure}[t]{.23\textwidth}
	\includegraphics{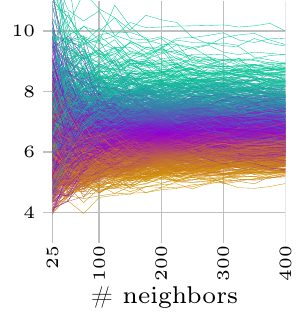}
	\caption{\sTLE}
\end{subfigure}
\begin{subfigure}[t]{.23\textwidth}
	\includegraphics{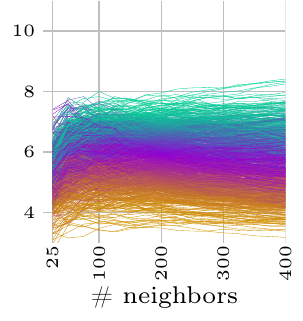}
	\caption{\sGAbider}
\end{subfigure}
\begin{subfigure}[t]{.23\textwidth}
	\includegraphics{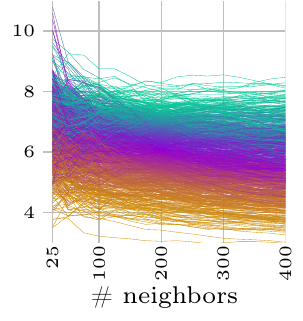}
	\caption{\sSAbider}
\end{subfigure}
\caption{Trails of estimates of 1000 points for varying neighborhood sizes on the \sMSet{6} set. Trail colors are assigned in order of \ID estimates at 200 neighbors.}
\label{plot:M6PCP}
\end{figure}
The \sMSet{6} data set consists of points sampled from a 6-dimensional
manifold non-linearly embedded in a 36-dimensional space.
As can be seen in \reffig{plot:M6}, for $k{=}150$ all estimators agree
on the data set to be inherently 6-dimensional at most points.
Where distance\-/based estimators tend to have a
long tail towards higher dimensions, the angle\-/based approaches
have an upper bound.
The estimates larger than 6 therefore must be artifacts from
the nonlinearity of the embedding used in this data set.
Even though this non-linearity shifts the upper bound beyond 6,
the angle\-/based approaches tend to have a shorter upper tail
and drop off faster to zero.
It is noteworthy that when comparing the estimates of the same method
at different neighborhood sizes between 30 and 300,
the angle\-/based approaches achieve higher scores than the distance\-/based approaches
on both Spearman's and Pearson's correlation coefficients.
In this sense, the angle\-/based approaches are more stable both in
the value of the estimates as well as the order of points by
estimated value when varying neighborhood sizes.
In more extreme neighborhoods (${<}30$ and ${>}300$), artifacts
from having too few samples for a reliable estimate and reaching the
boundaries of the data set, respectively, cause results to become less stable.
The stability is visualized in \reffig{plot:M6PCP} using trails of ID estimates for
individual points when varying the neighborhood size. In a perfectly stable result,
all lines would be parallel; instability causes lines that cross outside
their own color range (which represents the order at $k{=}200$) and hence the mixing of the colors.
The improved stability of the angle\-/based estimates is shown by a fairly
stable plot from 125 to 300 neighbors, whereas the distance\-/based estimates
begin to deviate much more at ${\leq}150$ and ${\geq}250$ neighbors respectively already.
Additionally, we can see in this plot that the average (the purple region)
of the distance\-/based estimates tends to increase with growing
neighborhood size whereas the distribution of \sGAbider appears
stable upwards of 100 neighbors. We can observe the upper bound property
of \sGAbider compared to \sSAbider.
The higher stability means that smaller neighborhoods suffice
for proper estimates and that the neighborhood parameter is
easier to choose.

\begin{figure}[tb]
\doubleIncludePlot
\plotNoLegend{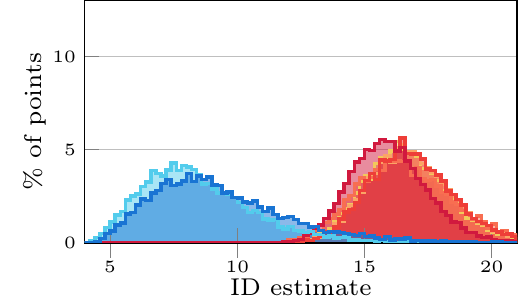}{100 neighbors}
\plotWithLegend{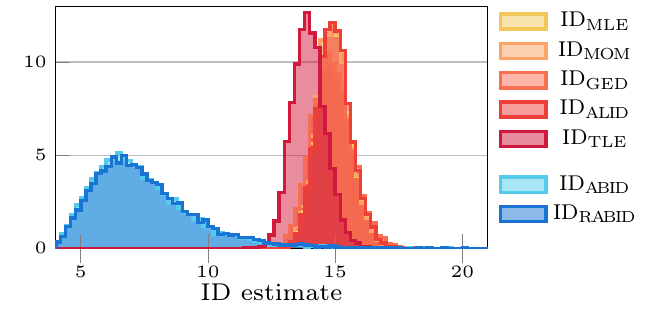}{500 neighbors}
\caption{Histograms of ID estimates of the \sMSet{10c} data set.}
\label{plot:M10C}
\end{figure}
The highest intrinsic dimensional data set provided by Amsaleg
et al. \cite{DBLP:conf/sdm/AmsalegCHKRT19}, \sMSet{10c}, is a 24-dimensional uniformly
sampled hypercube embedded in 25 dimensions.
On that data set, we observed a larger discrepancy between the
angle- and the distance\-/based approaches, shown in \reffig{plot:M10C}.
However, \sMSet{10c} consists of only 10,000 points,
which is the number of corners of a $\log_2(10,000){\approx} 13$ dimensional hypercube.
Hence, we doubt that this small sample can reliably represent a full
24-dimensional manifold, but the data likely is of much lower
dimensionality.
The estimates of the distance\-/based approaches move towards
this value as the neighborhood size increases.
Each of these 10,000 points then is, however,
essentially the corner of a 13-dimensional hypercube;
and will see the other data points as forming a hypercone,
producing smaller angles than if the data would evenly
surround the point. We believe it is because of this effect
(essentially a variant of the curse of dimensionality)
that the angle\-/based approaches estimate a far lower \ID.
\begin{figure}[tb]
\doubleIncludePlot
\plotNoLegend{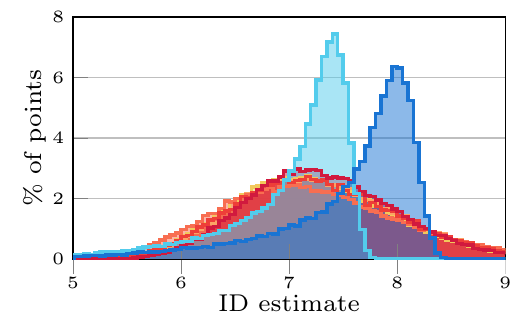}{100 neighbors}
\plotWithLegend{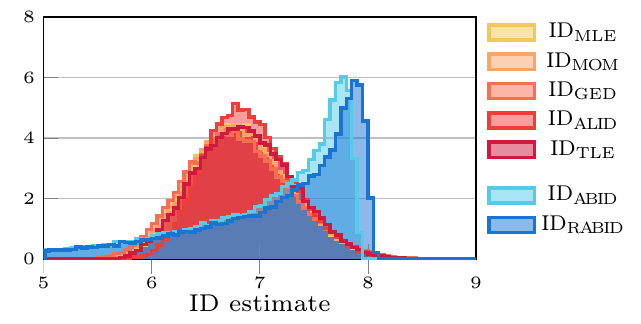}{500 neighbors}
\caption{Histograms of ID estimates of points on an 8-dimensional noisy
lattice.} %
\label{plot:lattice}
\end{figure}
To support this theory, we created a data set consisting of
points on the crossings of an 8-dimensional lattice, where
each dimension is sampled at the values
\todo[inline]{Schöner wäre 0,1,2,3 und Jitter maximal [0;1], eher kleiner.
Evtl. auch test mit zunehmendem Jitter.
Vielleicht für eine Langfassung oder deine Diss.\\
$\Rightarrow$ Okay, ja liest sich besser. Sollte für die Ergebnisse aber auch keinen Unterschied machen, wenn ich nichts übersehe. Der Jitter ist aus zwei Gründen so groß gewählt:\\
1. Soll damit die Regelmäßigkeit des Datensatzes stärker aufgelockert werden, damit es etwas näher an \textit{uniformly random} dran ist.\\
2. Die distanzbasierten Maße haben für kleinere obere Jittergrenzen wirklich nur Mist ausgegeben, weil zusätzliche Nachbarn immer in \enquote{Schüben} hinzukommen. Ab $\tfrac{1}{3}$ hatte sich das behoben. Ein Datensatz, der für die Referenzverfahren komplett unbrauchbar und worst case ist, empfand ich für eine vergleichende Analyse für zu unangemessen.}
$0, \frac{1}{3}, \frac{2}{3}$, and $1$ resulting in $4^8=65535$ points.
To smoothen the data
we added jitter to each point, uniformly drawn from $[0, \frac{1}{3}]^d$.
In that way, we obtained a data set that is more evenly distributed than uniform random
sampling and truly spans an 8-dimensional space.
On this data set, only the angle\-/based approaches were able
to estimate the correct dimension for most points as can
be seen in \reffig{plot:lattice}.
The many points where \sGAbider{} and \sSAbider{} estimate lower
values are likely the many points at the corners, edges, and sides of
this lattice.

\begin{figure}[t!]
\tripleIncludePlot
\plotNoLegend{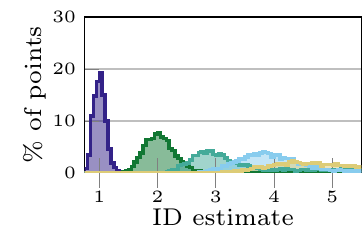}{\sMLE}
\plotNoLegend{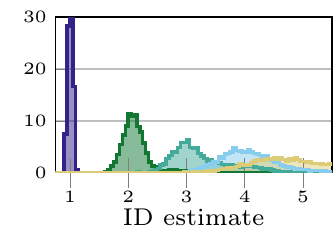}{\sTLE}
\plotWithLegend{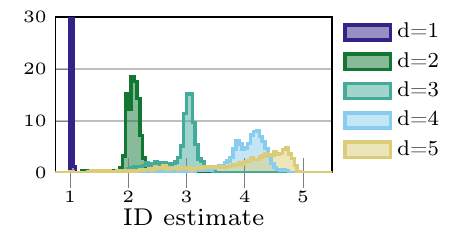}{\sGAbider}
\caption{Histograms of \ID estimates of nested hypercubes
with a neighborhood size of 100 colored by the hypercube
from which they were sampled.}
\label{plot:SNested5}
\end{figure}
To test the reliability of estimators in a mixture of manifolds,
we created instances of 1- through 5-dimensional hypercubes
linearly projected into the same 5-hypercube.
The projection was chosen such that every $d_i$-dimensional
hypercube intersects every $d_j$-dimensional hypercube in a
$\min(d_i,d_j)$-dimensional subspace.
For every hypercube, we sampled 5000 points uniformly at random
and computed \ID estimates using a neighborhood of different sizes.
In all experiments, the angle\-/based approaches were visibly
more capable of differentiating between the different dimensional
subspaces, which can be seen from the sharper spikes in
\reffig{plot:SNested5}.
Being capable of separating lower-dimensional subspaces
is an important feat, as noise in the embedding space can be
considered a high-dimensional manifold containing the
manifold of interest, and we believe this new \ID
estimate may help subspace discovery approaches that,
based on intrinsic dimensionality (e.g., \cite{DBLP:conf/sisap/BeckerHHLZ19}).
We observe that the angle\-/based approaches are more
robust against noise and in the presence of overlapping
subspaces.

\begin{figure}[tb!]
\doubleIncludePlot
\plotNoLegend{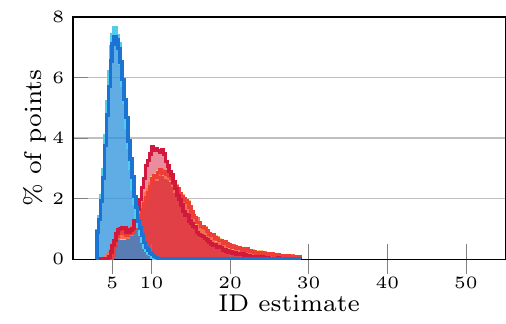}{\sMNIST}
\plotWithLegend{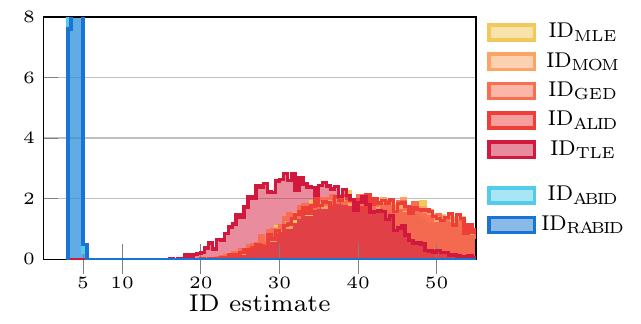}{\sGisette}
\caption{Histograms of ID estimates of the entire \sMNIST
data set and its noised subset \sGisette, both with
a neighborhood size of 300.}
\label{plot:MNIST}
\end{figure}
\subsection{Real Data}
We also analyzed the ID estimates on natural data sets.
The most interesting results were achieved on the \sMNIST data
set consisting of gray-scale images of hand-written digits
with a $28{\times}28$ resolution.
Our proposed approach estimates an ID of about 6 for most points,
whereas the distance\-/based approaches peak around 10 to 11.
From neighborhood sizes of 100 upwards, the distance\-/based
approaches, however, start forming a second peak at the same
ID as the angle\-/based approaches,
visible in \reffig{plot:MNIST}.
A possible explanation could be that the \sMNIST data set is
not uniformly random on the manifold, whereby small
environments are too noisy for distance\-/based approaches.
That claim is further supported by a higher resolution
subset of \sMNIST, \sGisette, consisting only of handwritten
$4s$ and $9s$ with a $50{\times}50$ resolution.
The 2500-dimensional image vectors in \sGisette were further
expanded by just as many uniformly random dimensions, mimicking
high-dimensional noise in a 5000-dimensional embedding.
The added noise harshly increased the estimates of the distance\-/based approaches.
The angle\-/based approaches, however, estimate a slightly lower
ID of about 4 for most points.
When considering a subset of points, the ID might vary in
both directions.
When retaining only a surface of a hypercube, the ID should
clearly reduce.
When sparsening the data such that, e.g., a space-filling curve
is reduced to a lattice-like data set, the ID estimate can
also increase.
However, we consider the smaller difference of the angle\-/based
estimates as more plausible,
even though the proposed estimates for the \sGisette data set
might be slightly too low as the high-dimensional noise
might have sparsened the local neighborhoods too much.
Nevertheless, we observe that the angle\-/based approach can
be more robust against noise in such a semi-real scenario.

\subsection{Estimator Interactions}
These experiments can also give some insight into the differences
and interactions of the different estimators.
As expected from theory, \sGAbider{} and \sSAbider{} converge towards
the same value for sufficiently large neighborhood sizes. Because it
is trivial to compute both estimators at once, we can use the
difference of the estimates to assess the quality; if they differ much
we may need larger sample sizes, if they are close the sample size
should be sufficient for this dimensionality. Because the angle\-/based
estimators appear to require fewer samples than the distance\-/based
approaches, this may also help to choose parameters for these methods.
Secondly, if the angle\-/based estimates are much smaller
than the distance\-/based estimates, the data set might not be
sufficiently densely sampled for this dimensionality;
if the angle\-/based estimates are much larger
than the distance\-/based estimates, the embedding may be
highly non-linear (as in the Koch snowflake example),
or may not preserve local density.

\section{Conclusions}\label{sec:conclusions}
In this paper, we propose a novel approach to estimate
local intrinsic dimensionality, along with two estimators,
\sGAbider and \sSAbider.
Instead of analyzing the expansion rate, as previous
distance\-/based approaches do, the novel approach focuses
on the geometry characterized by pairwise angles.
We have given an a~priori derivation of the novel estimators
derived from integral geometry.
Our experimental evaluation suggests that the novel approach
may be more robust against noise, computes a bit stabler estimates,
and gives estimates as reasonable as distance\-/based estimators,
albeit of a different nature.
We have further discussed how the difference between estimates
can hint at particular effects in the data.
The presented approach does not yet fully utilize all
interactions of the pairwise angles within a neighborhood,
which could lead to an improved \ID estimation in future work
by incorporating ideas of \cite{DBLP:conf/sdm/AmsalegCHKRT19}.
Future work may also investigate using
higher-order moments, as well as robust estimation techniques for
the second moment, such as the median average deviation or L-moments~\cite{doi:10.2307/2345653}.
As $\mathbb{E}[C]\gg 0$ indicates points remote from their neighbors,
this can be interesting to integrate into an outlier detection method based on
intrinsic dimensionality, which would yield a hybrid of ABOD~\cite{DBLP:conf/kdd/KriegelSZ08}
and LID outlier detection~\cite{DBLP:conf/sisap/HouleSZ18}.

\vfill
\pagebreak
\bibliographystyle{splncs04} %
\bibliography{literature}

\begin{thebibliography}{10}
\providecommand{\url}[1]{\texttt{#1}}
\providecommand{\urlprefix}{URL }
\providecommand{\doi}[1]{https://doi.org/#1}

\bibitem{DBLP:conf/kdd/AmsalegCFGHKN15}
Amsaleg, L., Chelly, O., Furon, T., Girard, S., Houle, M.E., Kawarabayashi, K.,
  Nett, M.: Estimating local intrinsic dimensionality. In: KDD (2015)

\bibitem{DBLP:journals/datamine/AmsalegCFGHKN18}
Amsaleg, L., Chelly, O., Furon, T., Girard, S., Houle, M.E., Kawarabayashi, K.,
  Nett, M.: Extreme-value-theoretic estimation of local intrinsic
  dimensionality. Data Min. Knowl. Discov.  \textbf{32}(6) (2018)

\bibitem{DBLP:conf/sdm/AmsalegCHKRT19}
Amsaleg, L., Chelly, O., Houle, M.E., Kawarabayashi, K., Radovanovic, M.,
  Treeratanajaru, W.: Intrinsic dimensionality estimation within tight
  localities. In: {SDM} (2019)

\bibitem{DBLP:conf/sisap/0001C19}
Aum{\"{u}}ller, M., Ceccarello, M.: The role of local intrinsic dimensionality
  in benchmarking nearest neighbor search. In: {SISAP} (2019)

\bibitem{DBLP:journals/corr/abs-1905-00643}
Barua, S., Ma, X., Erfani, S.M., Houle, M.E., Bailey, J.: Quality evaluation of
  {GANs} using cross local intrinsic dimensionality. CoRR
  \textbf{abs/1905.00643} (2019)

\bibitem{DBLP:conf/sisap/BeckerHHLZ19}
Becker, R., Hafnaoui, I., Houle, M.E., Li, P., Zimek, A.: Subspace
  determination through local intrinsic dimensional decomposition. In: {SISAP}
  (2019)

\bibitem{DBLP:journals/ijait/BraticHKOR19}
Bratic, B., Houle, M.E., Kurbalija, V., Oria, V., Radovanovic, M.: The
  influence of hubness on {NN}-descent. Int. J. Artif. Intell. Tools
  \textbf{28}(6) (2019)

\bibitem{DBLP:journals/jmlr/CaiFJ13}
Cai, T.T., Fan, J., Jiang, T.: Distributions of angles in random packing on
  spheres. J. Mach. Learn. Res.  \textbf{14}(1) (2013)

\bibitem{tr/nii/ChellyHK16}
Chelly, O., Houle, M.E., Kawarabayashi, K.: Enhanced estimation of local
  intrinsic dimensionality using auxiliary distances. Tech. Rep. NII-2016-007E,
  National Institute of Informatics (2016)

\bibitem{doi:10.1214/aos/1176343247}
Hill, B.M.: A simple general approach to inference about the tail of a
  distribution. The Annals of Statistics  \textbf{3}(5) (1975)

\bibitem{doi:10.2307/2345653}
Hosking, J.R.M.: L-moments: Analysis and estimation of distributions using
  linear combinations of order statistics. J. Royal Stat. Soc. B
  \textbf{52}(1) (1990)

\bibitem{DBLP:conf/sisap/Houle17}
Houle, M.E.: Local intrinsic dimensionality {I:} an extreme-value-theoretic
  foundation for similarity applications. In: {SISAP} (2017)

\bibitem{DBLP:conf/sisap/Houle17a}
Houle, M.E.: Local intrinsic dimensionality {II:} multivariate analysis and
  distributional support. In: {SISAP} (2017)

\bibitem{DBLP:conf/icdm/HouleKN12}
Houle, M.E., Kashima, H., Nett, M.: Generalized expansion dimension. In: {ICDM}
  Workshops (2012)

\bibitem{DBLP:conf/sisap/HouleSZ18}
Houle, M.E., Schubert, E., Zimek, A.: On the correlation between local
  intrinsic dimensionality and outlierness. In: {SISAP} (2018)

\bibitem{doi:10.1198/073500101316970421}
Huisman, R., Koedijk, K.G., Kool, C.J.M., Palm, F.: Tail-index estimates in
  small samples. Journal of Business \& Economic Statistics  \textbf{19}(2)
  (2001)

\bibitem{DBLP:conf/kdd/KriegelSZ08}
Kriegel, H., Schubert, M., Zimek, A.: Angle-based outlier detection in
  high-dimensional data. In: KDD (2008)

\bibitem{DBLP:journals/jmlr/RadovanovicNI10}
Radovanovic, M., Nanopoulos, A., Ivanovic, M.: Hubs in space: Popular nearest
  neighbors in high-dimensional data. J. Mach. Learn. Res.  \textbf{11},
  2487--2531 (2010)

\bibitem{DBLP:conf/sisap/SchubertG17}
Schubert, E., Gertz, M.: Intrinsic t-stochastic neighbor embedding for
  visualization and outlier detection - {A} remedy against the curse of
  dimensionality? In: {SISAP} (2017)

\bibitem{DBLP:journals/corr/abs-1902-03616}
Schubert, E., Zimek, A.: {ELKI:} {A} large open-source library for data
  analysis - {ELKI} release 0.7.5 "heidelberg". CoRR  \textbf{abs/1902.03616}
  (2019)

\bibitem{doi:10.1017/CBO9780511800474}
Watanabe, S.: Algebraic geometry and statistical learning theory, vol.~25.
  Cambridge University Press (2009)

\end{thebibliography}
\end{document}